\newtheorem{proposition}{Proposition}
\newcommand{\s}{\mathbf{s}}
\newcommand{\ab}{\mathbf{a}}
\title{Soft Policy Gradient Method for Maximum Entropy Deep Reinforcement Learning}
\author{
Wenjie Shi\footnote{Contact Author}\and
Shiji Song\And
Cheng Wu
\affiliations
Department of Automation, Tsinghua University, Beijing, China
\emails
shiwj16@mails.tsinghua.edu.cn,
shijis@mail.tsinghua.edu.cn,
wuc@tsinghua.edu.cn
}
\begin{document}

\maketitle

\begin{abstract}
Maximum entropy deep reinforcement learning (RL) methods have been demonstrated on a range of challenging continuous tasks. However, existing methods either suffer from severe instability when training on large off-policy data or cannot scale to tasks with very high state and action dimensionality such as 3D humanoid locomotion. Besides, the optimality of desired Boltzmann policy set for non-optimal soft value function is not persuasive enough. In this paper, we first derive soft policy gradient based on entropy regularized expected reward objective for RL with continuous actions. Then, we present an off-policy actor-critic, model-free maximum entropy deep RL algorithm called deep soft policy gradient (DSPG) by combining soft policy gradient with soft Bellman equation. To ensure stable learning while eliminating the need of two separate critics for soft value functions, we leverage double sampling approach to making the soft Bellman equation tractable. The experimental results demonstrate that our method outperforms in performance over off-policy prior methods.
\end{abstract}

\section{Introduction}
Model-free reinforcement learning (RL) aims to acquire an effective behavior policy through ongoing trial and error interaction with a black box environment. The sole goal of standard RL is to optimize the quality of an agent's behavior policy by maximizing cumulative discounted rewards. And so far, standard model-free RL has been applied to a range of challenging domains, such as games \cite{Silver2016Mastering}, robotics \cite{Levine2016End}, finance \cite{prashanth2016cumulative} and healthcare \cite{shortreed2011informing}. However, some notorious drawbacks of standard model-free RL such as sample complexity, hyperparameter sensitivity or instability, have limited its widespread adoption in real-world domains.

Generally, policy optimization and value iteration are two basic paradigms in standard model-free RL. Policy based approaches, such as TRPO \cite{schulman2015trust}, directly optimize the quantity of interest while remaining stable under function approximation, but policy based RL typically requires on-policy learning that is extravagantly expensive. By contrast, value based approaches, such as DQN \cite{mnih2015human}, can learn from any "off-policy" transitions sampled from identical environment, making them inherently more sample efficient \cite{gu2017q}. But unfortunately, off-policy learning does not stably interact with function approximation. Recently, some attempts have been made by combining policy and value based RL under actor-critic framework, such as DDPG \cite{lillicrap2015continuous}, but there still remain some unsettled issues including how to set choose the type of policy and exploration noise.

Maximum entropy RL, which augments the standard maximum RL objective with an entropy regularization term \cite{toussaint2009robot}, is another popular framework that can handle the tasks with multi-modality. As discussed in prior work, a stochastic policy may emerge as the optimal answer when maximum entropy RL tries to connect the optimal control with probabilistic inference \cite{todorov2008general}. Intuitively, framing control as inference produces policies that try to learn all of the ways instead of the best way to perform the task. More importantly, maximum entropy framework shows several amazing advantages when compared to standard RL framework. First, entropy regularization encourages exploration and helps prevent early convergence to sub-optimal policies. Second, the resulting policies can serve as a good initialization for finetuning to a more specific behavior. Third, maximum entropy framework provides a better exploration mechanism for seeking out the best mode in a multimodal reward landscape. Fourth, the resulting policies are more robust in the face of adversarial perturbations as demonstrated in \cite{haarnoja2017reinforcement}.

Actually, prior works have proposed various policy and value based RL methods under maximum entropy framework, such as combining policy gradient with Q-learning \cite{o2016combining}, soft Q-learning \cite{haarnoja2017reinforcement,schulman2017equivalence}. However, policy based methods requiring on-policy learning suffer from poor sample complexity, while value based methods applying off-policy learning need a complex approximate sampling procedure in continuous action spaces.

In this paper, we explore to design an off-policy and stable model-free deep RL algorithm by combining policy and value based methods under maximum entropy RL framework, which we call Deep Soft Policy Gradient (DSPG). First, we propose soft policy gradient under maximum entropy RL framework, and rigorous derivation of this proposition is given. Second, soft policy gradient is combined with soft Bellman equation by employing two deep function approximators to learn the soft Q-function and the stochastic policy, respectively. Finally, we present experimental results that show a significant improvement in performance over prior methods including DDPG and SAC.

\section{Related Work}
Our deep soft policy gradient (DSPG) algorithm mainly incorporates two ingredients: an off-policy actor-critic architecture with separate stochastic policy and action-value function networks, and an entropy regularization term to enable stability and exploration. We review prior works that draw on some of these ideas in this section.

Off-policy actor-critic has been verified as a feasible method to improve sample efficiency by exploiting off-policy data from other sources, such as past experience. And a particularly popular implementation of off-policy actor-critic architecture is DDPG \cite{lillicrap2015continuous}, which employs a Q-function estimator to enable off-policy learning, and a deterministic actor to optimize this Q-function by applying deterministic policy gradient \cite{silver2014deterministic}. However, some issues are still not completely settled. First, for many tasks with continuous action spaces, it is not possible to straightforwardly apply Q-learning which solves the greedy policy, DDPG instead uses deterministic policy to avoid global optimization with respect to the action at every timestep, but which is at the cost of performance loss. Second, additional noise process is required for exploration in the context of deterministic policy, the type and scale of noise must be chosen meticulously for different problem settings to achieve good performance. Furthermore, the interplay between the deterministic actor and the Q-function typically makes DDPG suffer from severe hyperparameter sensitivity \cite{henderson2017deep}. As a consequence, it is difficult to generalize DDPG to very complex, high-dimensional tasks.

Maximum entropy reinforcement learning optimizes policies to maximize the entropy regularized expected reward objective. More recently, several progresses have been made in developing policy and value based methods under the framework of maximum entropy reinforcement learning. Roughly speaking, existing works fall into two categories: softmax temporal consistency \cite{nachum2017bridging,nachum2017trust} and soft policy iteration \cite{o2016combining,haarnoja2017reinforcement,haarnoja2018soft}. Softmax temporal consistency between the optimal policy and soft optimal state value leads to path-wise consistency learning (PCL) methods \cite{nachum2017bridging} and Trust-PCL \cite{nachum2017trust}, which solve jointly the policy and soft state value by minimizing soft consistency error. However, PCL and Trust-PCL depend on complete trajectories and succumb to the instability when training on large off-policy data. Concurrent to PCL and Trust-PCL, some other methods such as PGQ \cite{o2016combining} and soft Q-learning \cite{haarnoja2017reinforcement}, instead apply the idea underlying generalized policy iteration to learn maximum entropy policies by alternating policy evaluation and policy improvement. However, PGQ operate on simple tabular representations and are difficult to scale to continuous or high-dimensional domains, while soft Q-learning draws samples from an approximate sampling network. Building on soft Q-learning, soft actor-critic (SAC) \cite{haarnoja2018soft} realizes policy improvement by minimizing Kullback-Leibler divergence between the current policy and the desired policy. However, how to choose the desired policy set for non-optimal value functions is somewhat subjective. Moreover, a separate function approximator for soft state-value is necessary to stabilize the learning, but which results in cumulative approximation errors. In contrast, we derive directly soft policy gradient based on the entropy regularized expected reward objective. And double sampling approach is utilized to stabilize the learning without separate soft value function approximator.

\section{Preliminaries}
In this section, we will define the reinforcement learning problem addressed in this paper and briefly summarize maximum entropy reinforcement learning framework.

\subsection{Notation}
We study reinforcement learning and control tasks with continuous action spaces. An agent aims to learn the optimal policies to maximize an entropy regularized expected reward objective by continuing trial-and-error in a stochastic environment. To this end, we define the problem as policy search in an infinite-horizon Markov decision process (MDP), which consists of the tuple $(\mathcal{S},\mathcal{A},p,r)$. The state space $\mathcal{S}$ and action space $\mathcal{A}$ are assumed to be continuous, and the state transition probability $p^{\ab_t}_{\s_t\s_{t+1}}: \mathcal{S}\times\mathcal{S} \times\mathcal{A}\rightarrow[0,\infty)$ represents the probability density of next state $\s_{t+1}\in\mathcal{S}$ given current state $\s_t\in\mathcal{S}$ and action $\ab_t\in\mathcal{A}$. The environment emits a reward $r:\mathcal{S}\times\mathcal{A}\rightarrow[r_{\rm min},r_{\rm max}]$ on each transition, which we will abbreviate as $r_t\triangleq r(\s_t,\ab_t)$ to simplify notation. Throughout this paper, we use $p_ k^\pi(\s\rightarrow\s')$ to denote the probability of going from state $\s$ to state $\s'$ in $k$ steps under the policy $\pi$. For $\s_0=s,\s_k=\s'$, we have
\begin{align}\label{eqn:trajectory probability}
 \begin{split}
  p_k^\pi(\s\rightarrow\s')=&\sum\nolimits_{\ab_0}\pi(\ab_0|\s_0)
                             \sum\nolimits_{\s_1}p^{\ab_0}_{\s_0\s_1}\cdots \\
                            &\sum\nolimits_{\ab_{k-1}}\pi(\ab_{k-1}|\s_{k-1})
                             p^{\ab_{k-1}}_{\s_{k-1}\s_k},
 \end{split}
\end{align}
except that $p_0^\pi(\s\rightarrow\s)=1$. Besides, we will also use $\rho_\pi(\s_t)$ and $\rho_\pi(\s_t,\ab_t)$ to denote the state and state-action marginals of trajectory distribution induced by the policy $\pi(\ab_t|\s_t)$.

\subsection{Maximum Entropy Reinforcement Learning}
Different from standard reinforcement learning, the sole goal of maximum entropy reinforcement learning is to maximize a more general entropy regularized objective, which is defined by augmenting the expected sum of rewards with discounted entropy terms, such that the optimal policy aims to maximize its entropy at each visited state:
\begin{align}\label{eqn:objective}
 J(\pi) = \sum\nolimits_{t=0}^{T-1}\mathbb{E}_{(\s_t,\ab_t)\sim \rho_\pi}\big[r(\s_t,\ab_t)+\tau\mathcal{H}^\pi(\cdot|\s_t)\big],
\end{align}
where $\mathcal{H}^\pi(\cdot|\s_t)=-\sum_{\ab_t}\pi(\ab_t|\s_t)\log\pi(\ab_t|\s_t)$ is the entropy of policy $\pi$ at state $\s_t$. And $\tau\geq 0$ is a user-specified temperature parameter that controls the degree of entropy regularization. For the rest of this paper, we omit writing the temperature explicitly, as it can always be subsumed into the reward by scaling it by $\tau^{-1}$. Furthermore, this objective can be easily extended to infinite horizon problems by introducing a discount factor $\gamma$ to ensure that the expected sum of rewards and entropies is finite.

Under the framework of maximum entropy reinforcement learning, soft Q-function $Q^\pi(\s_t,\ab_t)$ describes the expected sum of discounted future rewards and entropies except the entropy of state $\s_t$ after taking an action $\ab_t$ in state $\s_t$ and thereafter following the policy $\pi$:
\begin{align}\label{eqn:soft Q-function}
 \begin{split}
  Q^\pi(\s_t,\ab_t) =& r_t+   \\
  &\sum\nolimits_{i>t}\gamma^{i-t}\mathbb{E}_{(\s_i,\ab_i)\sim \rho_\pi}\big[r_i+ \mathcal{H}^\pi(\cdot|\s_i)\big].
 \end{split}
\end{align}
Similar to standard reinforcement learning, we can derive directly soft Bellman equation from the above definition of soft Q-function as follows.
\begin{align}\label{eqn:soft Bellman equation}
 \begin{split}
  Q^\pi(\s_t,\ab_t) &= \mathbb{E}_{\s_{t+1}\sim p}\big[r_t +    \\
  &\gamma\mathbb{E}_{\ab_{t+1}\sim\pi}[Q^\pi(\s_{t+1},\ab_{t+1})+ \mathcal{H}^\pi(\cdot|\s_{t+1})]\big].
 \end{split}
\end{align}
Prior works have made great progresses in learning maximum entropy policy by directly solving for optimal soft Q-function \cite{ziebart2008maximum,fox2015taming,haarnoja2017reinforcement}. In subsequent sections, we will first propose and prove rigorously soft policy gradient under the framework of maximum entropy reinforcement learning, which corresponds to deterministic policy gradient in standard reinforcement learning. Then, we will further discuss how we can combine soft policy gradient with soft Bellman equation to develop an off-policy and stable deep reinforcement learning algorithm called deep soft policy gradient (DSPG), which corresponds to DDPG in standard reinforcement learning.

\section{Soft Policy Gradient}
Inspired by the observation that policy gradient methods \cite{degris2012model} are perhaps the most widely used for tasks with continuous action spaces in standard reinforcement learning, we aim to develop corresponding soft policy gradient methods for maximum entropy reinforcement learning. In this section, we derive directly soft policy gradient based on the entropy regularized expected reward objective and thereafter combine it with function approximation.

\subsection{Soft Policy Gradient}
The basic idea behind soft policy gradient is to represent the policy by a parametric probability distribution $\pi(\ab|\s)=\mathbb{P}[\ab|\s;\theta]$ that stochastically selects action $\ab$ in state $\s$ according to parameter vector $\theta$. And soft policy gradient typically proceed by sampling this stochastic policy and adjusting the parameter $\theta$ of this policy in the direction of greater entropy regularized expected reward objective.
\begin{proposition}[Soft Policy Gradient]
 Suppose that the MDP satisfies the policy $\pi(\ab|\s)$ is differentiable with respect to its parameter $\theta$, i.e., that $\frac{\partial\pi(\ab|\s)}{\partial\theta}$ exists. Then,
 \begin{align}\label{eqn:soft policy gradient}
  \begin{split}
   \nabla_{\theta}J(\pi_\theta) =& \mathbb{E}_{(\s,\ab)\sim\rho_{\pi_\theta}}\big[\big(Q^\pi(\s,\ab) \\
   &-\log\pi(\ab|\s)-1\big)\nabla_{\theta}\log\pi(\ab|\s)\big].
  \end{split}
 \end{align}
\end{proposition}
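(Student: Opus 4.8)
The plan is to adapt the proof of the classical stochastic policy gradient theorem, carrying the extra entropy bookkeeping through the Bellman recursion. First I would introduce the soft state-value function $V^\pi(\s) \triangleq \mathbb{E}_{\ab\sim\pi}[Q^\pi(\s,\ab)] + \mathcal{H}^\pi(\cdot|\s) = \mathbb{E}_{\ab\sim\pi}[Q^\pi(\s,\ab) - \log\pi(\ab|\s)]$ and observe that, by unrolling the definition \eqref{eqn:soft Q-function}, the (discounted infinite-horizon form of the) objective \eqref{eqn:objective} satisfies $J(\pi_\theta) = \mathbb{E}_{\s_0}[V^\pi(\s_0)]$ with $\s_0$ drawn from the initial-state distribution. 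This reduces everything to computing $\nabla_\theta V^\pi(\s)$.

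Next I would set up a recursion for $\nabla_\theta V^\pi$. The soft Bellman equation \eqref{eqn:soft Bellman equation} can be written $Q^\pi(\s,\ab) = \mathbb{E}_{\s'\sim p}[r(\s,\ab) + \gamma V^\pi(\s')]$, and since neither $r$ nor $p$ depends on $\theta$ this gives $\nabla_\theta Q^\pi(\s,\ab) = \gamma\,\mathbb{E}_{\s'\sim p}[\nabla_\theta V^\pi(\s')]$ --- the identity that lets the recursion close. Differentiating $V^\pi(\s) = \sum_{\ab}\pi_\theta(\ab|\s)\big(Q^\pi(\s,\ab) - \log\pi_\theta(\ab|\s)\big)$ by the product rule, the product-rule term $\pi_\theta(\ab|\s)\nabla_\theta(-\log\pi_\theta(\ab|\s)) = -\nabla_\theta\pi_\theta(\ab|\s)$ contributes $-\sum_{\ab}\nabla_\theta\pi_\theta(\ab|\s)$, which combines with $\sum_{\ab}\nabla_\theta\pi_\theta(\ab|\s)\big(Q^\pi(\s,\ab) - \log\pi_\theta(\ab|\s)\big)$ to yield precisely the $-1$ appearing in \eqref{eqn:soft policy gradient}, while the remaining term $\sum_{\ab}\pi_\theta(\ab|\s)\nabla_\theta Q^\pi(\s,\ab) = \gamma\sum_{\s'}p_1^\pi(\s\rightarrow\s')\nabla_\theta V^\pi(\s')$ is the recursive part. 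Writing $\phi(\s) \triangleq \sum_{\ab}\nabla_\theta\pi_\theta(\ab|\s)\big(Q^\pi(\s,\ab) - \log\pi_\theta(\ab|\s) - 1\big)$, this is $\nabla_\theta V^\pi(\s) = \phi(\s) + \gamma\sum_{\s'}p_1^\pi(\s\rightarrow\s')\nabla_\theta V^\pi(\s')$.

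Then I would iterate the recursion to get $\nabla_\theta V^\pi(\s) = \sum_{k\geq 0}\gamma^k\sum_{\s'}p_k^\pi(\s\rightarrow\s')\phi(\s')$, take the expectation over $\s_0$, collapse the resulting discounted visitation frequencies into the state-visitation measure to obtain $\nabla_\theta J(\pi_\theta) = \sum_{\s}\rho_{\pi_\theta}(\s)\,\phi(\s)$, and finally substitute $\nabla_\theta\pi_\theta(\ab|\s) = \pi_\theta(\ab|\s)\nabla_\theta\log\pi_\theta(\ab|\s)$ inside $\phi$ so that the outer sum over $\s$ weighted by $\rho_{\pi_\theta}$ and the inner sum over $\ab$ weighted by $\pi_\theta$ fold into the single expectation $\mathbb{E}_{(\s,\ab)\sim\rho_{\pi_\theta}}$, giving \eqref{eqn:soft policy gradient}.

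The main obstacle is analytic rather than algebraic: justifying the interchange of $\nabla_\theta$ with the (continuous) action integral, with the expectation over transitions, and with the infinite discounted sum produced by unrolling the recursion --- the dominated-convergence/uniform-integrability conditions that make the formal steps rigorous, together with checking that $V^\pi$ and its gradient are finite under the discount. A conceptual subtlety worth stating explicitly is that $Q^\pi$ here is the soft Q-function of the current policy $\pi_\theta$, so $\nabla_\theta Q^\pi \neq 0$; the argument works only because this $\theta$-dependence is fully absorbed into the recursive term and therefore re-emerges, after unrolling, solely through $\rho_{\pi_\theta}$ and the score $\nabla_\theta\log\pi_\theta$. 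It is also worth noting that the $-1$ is inessential up to a baseline, since $\mathbb{E}_{\ab\sim\pi_\theta}[\nabla_\theta\log\pi_\theta(\ab|\s)] = 0$.
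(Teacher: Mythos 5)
Your proposal is correct and follows essentially the same route as the paper's proof: your $V^\pi(\s)$ is the paper's $J(\pi_\theta|\s_0)$, your $\phi(\s)$ is the paper's $\mathcal{G}(\pi_\theta|\s)$ after simplification, and the product rule, soft-Bellman substitution, unrolling into the discounted visitation measure, and final score-function rewrite all match step for step. Your explicit remarks on interchanging $\nabla_\theta$ with integrals and on the $-1$ being a baseline are sensible additions but do not change the argument.
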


\begin{proof}
According to the definition of soft Q-function (\ref{eqn:soft Q-function}), we rewrite the entropy regularized expected reward objective (\ref{eqn:objective}) as follows.
\begin{align}\label{eqn:Q based objective}
 J(\pi_\theta) = \mathbb{E}_{(\s,\ab)\sim\rho_{\pi_\theta}} \big[Q^\pi(\s,\ab)+\mathcal{H}^\pi(\cdot|\s)\big].
\end{align}
To get the gradient of (\ref{eqn:Q based objective}), we first derive the gradient of the entropy regularized expected reward objective $J(\pi_\theta|\s_0)$ with a designated start state $\s_0$.

For the convenience of displaying, we give the following simplified representation.
\begin{align}
 \mathcal{G}(\pi_\theta|\s) = \sum_{\ab}\frac{\partial \pi(\ab|\s)}{\partial\theta}Q^\pi(\s,\ab)+\frac{\partial \mathcal{H}^\pi(\cdot|\s)}{\partial\theta}. \label{eqn:proof-0}
\end{align}
Then, we have
\begin{align}
 \nonumber
 \frac{\partial J(\pi_\theta|\s_0)}{\partial\theta} &=\frac{\partial}{\partial\theta}\mathbb{E}_{\ab_0\sim\pi_\theta} \big[Q^\pi(\s_0,\ab_0)+\mathcal{H}^\pi(\cdot|\s_0)\big]  \\
 \nonumber
 &= \frac{\partial}{\partial\theta}\bigg[\sum_{\ab_0}\pi(\ab_0|\s_0)Q^\pi(\s_0,\ab_0) +\mathcal{H}^\pi(\cdot|\s_0)\bigg] \\
 &= \sum_{\ab_0}\pi(\ab_0|\s_0)\frac{\partial Q^\pi(\s_0,\ab_0)}{\partial\theta} +\mathcal{G}(\pi_\theta|\s_0)\label{eqn:proof-1}.
\end{align}
Substituting the soft Bellman equation (\ref{eqn:soft Bellman equation}) into (\ref{eqn:proof-1}), we obtain
\begin{align}
 \begin{split}
 \frac{\partial J(\pi_\theta|\s_0)}{\partial\theta} &= \gamma p_1^\pi(\s_0\rightarrow\s_1)\sum_{\ab_1}\pi(\ab_1|\s_1) \frac{\partial Q^\pi(\s_1,\ab_1)}{\partial\theta} \label{eqn:proof-2}\\
 &~~~~~+\sum_{k\in\{0,1\}}\sum_{\s_k}\gamma^k p_k^\pi(\s_0\rightarrow\s_k) \mathcal{G}(\pi_\theta|\s_k).
 \end{split}
\end{align}
Repeating the above expansion step infinite times, we get
\begin{align}
 \nonumber
 \frac{\partial J(\pi_\theta|\s_0)}{\partial\theta} &= \sum_{k=0}^{\infty}\sum_{\s_k}\gamma^k p_k^\pi(\s_0\rightarrow\s_k)\mathcal{G}(\pi_\theta|\s_k) \label{eqn:proof-3} \\
 &= \sum_{\s}\sum_{k=0}^{\infty}\gamma^k p_k^\pi(\s_0\rightarrow\s)\mathcal{G}(\pi_\theta|\s).
\end{align}
Then, we can get the gradient of (\ref{eqn:Q based objective}) by calculating the expectation of start state $\s_0$ in (\ref{eqn:proof-3}) as follows:
\begin{align}
 \nonumber
 \frac{\partial J(\pi_\theta)}{\partial\theta} \!&= \sum_{\s_0}\rho_{\pi}(\s_0)\sum_{\s}\sum_{k=0}^{\infty}\gamma^k p_k^\pi(\s_0\rightarrow\s)\mathcal{G}(\pi_\theta|\s) \\
 \nonumber
 &= \sum_{\s}\sum_{\s_0}\rho_{\pi}(\s_0)\sum_{k=0}^{\infty}\gamma^k p_k^\pi(\s_0\rightarrow\s)\mathcal{G}(\pi_\theta|\s) \\
 \nonumber
 &= \sum_{\s}\rho_{\pi}(\s)\bigg[\sum_{\ab}\frac{\partial \pi(\ab|\s)}{\partial\theta}Q^\pi(\s,\ab)+\frac{\partial \mathcal{H}^\pi(\cdot|\s)}{\partial\theta}\bigg] \\
 \nonumber
 &= \sum_{\s}\!\rho_{\pi}(\s)\!\sum_{\ab}\!\big(Q^\pi\!(\s,\ab)\!-\!\log\pi(\ab|\s)\!-\!1\big)\!\frac{\partial \pi(\ab|\s)}{\partial\theta} \\
 \nonumber
 &= \mathbb{E}_{(\s,\ab)\sim\rho_{\pi_\theta}}\big[\big(Q^\pi(\s,\ab) \\
 \nonumber
 &~~~~~~~~~~~~~~~~~~~~~~~~-\log\pi(\ab|\s)-1\big)\nabla_{\theta}\log\pi(\ab|\s)\big].
\end{align}
\end{proof}

The above expression of soft policy gradient is surprisingly simple. In particular, there are no terms of the form $\frac{\partial \rho_{\pi_\theta}(\s)}{\partial \theta}$, which means the effect of policy changes on the distribution of states does not appear.

Moreover, this proposition has important practical value, because it is convenient for approximating the gradient by sampling. Just as policy gradient theorem \cite{sutton2000policy} in standard reinforcement learning, a key issue that soft policy gradient must address is how to estimate the soft Q-function $Q^\pi(\s,\ab)$. Actually, a simple approach is to use a sampled return, which has led to a variant of the REINFORCE algorithm \cite{williams1992simple}.

\subsection{Soft Policy Gradient with Approximation}
As discussed above, using the true return to approximate soft Q-function $Q^\pi(\s,\ab)$ is a feasible approach, but which is notoriously expensive in terms of sample complexity. Instead we consider the case in which $Q^\pi(\s,\ab)$ is approximated by a learned function approximator. In principle, if the approximation is sufficiently good, we might hope to use it in place of $Q^\pi(\s,\ab)$ in (\ref{eqn:soft policy gradient}) and still point roughly in the direction of the gradient. And \cite{singh1994learning} has proved that for the special case of function approximation arising in a tabular POMDP one could assure positive inner product with the gradient, which is sufficient to ensure improvement for moving in that direction.

However, substituting a function approximator $Q^\omega(\s,\ab)$ for the true soft Q-function $Q^\pi(\s,\ab)$ may introduce bias. Fortunately, there will be no bias if the following two conditions are satisfied according to \cite{sutton2000policy}.
\begin{itemize}
 \item The function approximator $Q^\omega(\s,\ab)$ is compatible in the sense that
 \begin{align}\label{eqn:compatible function}
  \frac{\partial Q^\omega(\s,\ab)}{\partial\omega}=\frac{\partial\pi(\s,\ab)}{\partial\theta}\frac{1}{\pi(\s,\ab)}.
 \end{align}
 \item The parameters $\omega$ are chosen to minimise the mean-squared error $\varepsilon^2(\omega)$.
 \begin{align}\label{eqn:mean-squared error}
  \varepsilon^2(\omega)=\mathbb{E}_{(\s,\ab)\sim\rho_{\pi_\theta}}\bigg[\big(Q^\omega(\s,\ab)-Q^\pi(\s,\ab) \big)^2\bigg].
 \end{align}
\end{itemize}
Then, we have the following soft policy gradient with function approximation.
\begin{align}\label{eqn:soft policy gradient with approximation}
  \nabla_{\theta}J(\pi_\theta) = \mathbb{E}_{(\s,\ab)\sim\rho_{\pi_\theta}}&\big[\big(Q^\omega(\s,\ab) \\
  &-\log\pi(\ab|\s)-1\big)\nabla_{\theta}\log\pi(\ab|\s)\big]. \nonumber
\end{align}

In other words, compatible function approximators are linear in the same features as the stochastic policy, and the parameters $\omega$ are the solution to the linear regression problem that estimates $Q^\pi(\s,\ab)$ from these features. In practice, condition (\ref{eqn:mean-squared error}) is usually relaxed in favour of policy evaluation algorithms that estimate the value function more efficiently by temporal-difference learning \cite{bhatnagar2008incremental}. Based on (\ref{eqn:soft policy gradient with approximation}), a form of policy iteration with function approximation can be proved to converge to a locally maximum entropy policy according to prior works \cite{bertsekas1996neuro,sutton2000policy}.

\section{Combined with Soft Bellman Equation}
In this section, we will present our proposed deep soft policy gradient (DSPG) algorithm, which is motivated by combining soft policy gradient with soft Bellman equation. First, We will further discuss how to achieve soft policy evaluation with soft Bellman equation (\ref{eqn:soft Bellman equation}) when a function approximator is used for soft Q-function. Then, the details of our DSPG algorithm will be given.

\subsection{Soft Bellman Equation with Approximation}
To improve the sample efficiency, we employ a function approximator $Q^\omega(\s,\ab)$ to estimate soft Q-function $Q^\pi(\s,\ab)$ in order to make soft policy gradient tractable while avoiding on-policy interaction with environment. Similar to many approaches in standard reinforcement learning, we make use of soft Bellman equation (\ref{eqn:soft Bellman equation}) to update the parameters $\omega$ of function approximator $Q^\omega(\s,\ab)$.

According to soft Bellman equation (\ref{eqn:soft Bellman equation}), we first define soft Bellman backup operator $\mathcal{T}Q^\omega$ as follows.
\begin{align}\label{eqn:soft Bellman backup operator}
 \begin{split}
  \mathcal{T}Q^\omega =&~r_t + \\
   &\gamma\mathbb{E}_{\ab_{t+1}\sim\pi}\big[Q^\omega(\s_{t+1},\ab_{t+1})+ \mathcal{H}^\pi(\cdot|\s_{t+1})\big].
 \end{split}
\end{align}
But in practice, we typically use the following sampled soft Bellman backup operator.
\begin{align}\label{eqn:sampled soft Bellman backup operator}
 \tilde{\mathcal{T}}Q^\omega &= r_t + \\  \nonumber
 &\frac{\gamma}{M}\sum\nolimits_{j}\big[Q^\omega(\s_{t+1},\ab_{(t+1)j})- \log\pi_{\theta^\prime}(\ab_{(t+1)j}|\s_{t+1})\big].
\end{align}
where the actions $\ab_{(t+1)j}$ are sampled from the target policy $\pi_{\theta^\prime}(\cdot|\s_{t+1})$ of current policy. Different from DDPG, which is a commonly used off-policy actor-critic algorithm in standard reinforcement learning and applies the deterministic policy to avoid the inner expectation in Bellman equation, we approximate the expectation of action in (\ref{eqn:soft Bellman backup operator}) by repeatedly sampling with the current policy. Indeed, the deterministic policy may induce catastrophic and unstable update of the policy and value function.

Note that the outer expectation in soft Bellman equation (\ref{eqn:soft Bellman equation}) depends only on the environment or state transition probability, which makes it possible to learn $Q^\omega(\s,\ab)$ by using previous off-policy transitions sampled from a replay buffer. Therefore, the parameters $\omega$ of $Q^\omega(\s,\ab)$ can be optimized by minimizing the soft loss:
\begin{align}\label{eqn:soft loss}
 \mathcal{L}(\omega) = \frac{1}{N}\sum_{i}\bigg[\big(Q^\omega(\s_i,\ab_i)- \tilde{\mathcal{T}}Q^{\omega^\prime}\big)^2\bigg].
\end{align}
where $Q^{\omega^\prime}$ is the target network used to stabilize the learning of soft Q-function $Q^\omega$.

\subsection{Deep Soft Policy Gradient Algorithm}
It is not possible to straightforwardly apply soft Bellman equation (\ref{eqn:soft Bellman equation}) to continuous action spaces, because the inner expectation with respect to the action is intractable when there is not an explicitly feasible way to sample from the current policy. Instead, we used an off-policy actor-critic approach based on soft policy gradient derived in last section. Specifically, as discussed above, soft policy gradient method maintains a parameterized actor function $\pi_\theta(\ab|\s)$, which specifies the current policy by stochastically mapping states to all possible actions, while the critic $Q^\omega(\s,\ab)$ is learned to evaluate the current policy by optimizing the soft loss (\ref{eqn:soft loss}) based on soft Bellman equation.

\paragraph{Gradient Clipping.} One unique challenge in maximum entropy reinforcement learning when applying soft policy gradient is that there exist the terms $Q^\omega(\s,\ab)$ explicitly in (\ref{eqn:soft policy gradient with approximation}), which means that soft Q-function will directly influence the scale of gradient and thus lead to unstable update of policy and even bad convergence. Motivated by the idea to deal with gradient explosion problem, we leverage gradient clipping approach to address this issue.
Gradient clipping approach aims to limit soft policy gradient to an appropriate range by a global norm $\mathcal{N}$. Intuitively, clipped gradients prevent drastic updates of the current policy and thus stabilize the learning. And an additional benefit from gradient clipping is that constrained soft policy gradient eliminates the impact of different baselines of soft Q-functions in various reinforcement learning tasks. By contrast, it is not necessary for DDPG to use gradient clipping approach, because the deterministic policy gradient depends on $\frac{\partial Q^\omega(\s,\ab)}{\partial \ab}$ rather than $Q^\omega(\s,\ab)$.

\paragraph{Double Sampling.} Another key challenge is that soft policy gradient (\ref{eqn:soft policy gradient with approximation}) depends on not only state distribution but also the current policy, which makes it impossible to compute soft policy gradient with transitions sampled from replay buffer. Instead we utilize double sampling approach to estimate the expectation with respect to action.
The main idea underlying double sampling approach is to separate the expectation with respect to the state and action by two sampling steps. The first sampling step uses the transitions sampled from replay buffer to approximate the expectation with respect to state distribution induced by the current policy, which is commonly used in policy gradient implementations, such as DPG and DDPG. In the second sampling step, we estimate the expectation with respect to action by sampling with the current policy. Although existing off-policy actor-critic algorithm \cite{degris2012model} uses a different behaviour policy to generate trajectories, double sampling approach has distinct advantage in terms of performance and sampling complexity.

Based on above gradient clipping and double sampling approaches, soft policy gradient can be rewritten in a tractable form as follows.
\begin{align}
 \label{eqn:tractable soft policy gradient}
 \nabla_{\theta}J(\pi_\theta) &\approx \frac{1}{NM}\sum\nolimits_{i}\sum\nolimits_{j}\big[\big(Q^\omega(\s_i,\ab_{ij}) \\
 \nonumber
 &~~~~~-\log\pi(\ab_{ij}|\s_i)-1\big)\nabla_{\theta}\log\pi(\ab_{ij}|\s_i)\big]. \\
 \label{eqn:clipped soft policy gradient}
 \nabla_{\theta}^{CLIP}J(\pi_\theta) &= {Clip\_by\_norm}(\nabla_{\theta}J(\pi_\theta), \mathcal{N})
\end{align}
where $\ab_{ij}$ is sampled with the current policy $\pi(\cdot|\s_i)$. ${Clip\_by\_norm}$ denotes the gradient clipping operator.

To summarize, we propose deep soft policy gradient (DSPG, Algorithm \ref{alg:DSPG}) algorithm for learning maximum entropy policies in continuous domains. The algorithm proceeds by alternating between evaluating the current policy with soft Bellman equation and improving the current policy with soft policy gradient.

\begin{algorithm}[!t]
 \caption{DSPG Algorithm}
 \label{alg:DSPG}
 Randomly initialize the critic $Q^\omega(\s,\ab)$ and actor $\mu_\theta (\cdot|\s)$ with weights $\omega$ and $\theta$\;
 Initialize the target network $Q^{\omega^\prime}$ and $\mu_{\theta^\prime}$ with $\omega^\prime\leftarrow\omega$ and $\theta^\prime\leftarrow\theta$\;
 Initialize replay buffer $\mathcal{R}$\;
 \For{each episode}
 {
  Reset initial observation state $\s_0$\;
  \For{each step}
  {
   Sample an action $\ab_t$ for $\s_t$ using $\pi_\theta(\cdot|\s_t)$\;
   Execute action $\ab_t$ and receive $r_t$, $\s_{t+1}$\;
   Store transition ($\s_t,\ab_t,r_t,\s_{t+1}$) in $\mathcal{R}$\;
   Sample a random minibatch of $N$ transitions ($\s_i,\ab_i,r_i,\s_{i+1}$) from $\mathcal{R}$\;
   Sample $M$ actions $\ab_{(i+1)j}$ for each state $\s_{i+1}$ using $\pi_{\theta^\prime}(\cdot|\s_{i+1})$\;
   Compute the soft loss $\mathcal{L}(\omega)$ (\ref{eqn:soft loss})\;
   Update the critic by minimizing the soft loss\;
   Sample $M$ actions $\ab_{ij}$ for each state $\s_i$ using $\pi_\theta(\cdot|\s_i)$\;
   Compute the clipped soft policy gradient (\ref{eqn:clipped soft policy gradient})\;
   Update the actor by applying the clipped soft policy gradient $\nabla_{\theta}^{CLIP}J(\pi_\theta)$\;
   Update the target networks:
   $~~~~~~~~~~~~~~~~~~~~~\omega^\prime\leftarrow\alpha\omega+(1-\alpha)\omega^\prime$
   $~~~~~~~~~~~~~~~~~~~~~~\theta^\prime\leftarrow\alpha\theta+(1-\alpha)\theta^\prime$
   }
  }
\end{algorithm}

\section{Experiments}
We evaluate our proposed algorithm, namely DSPG, across several benchmark continuous control tasks and compare them to standard DDPG and SAC implementations. We find that DSPG can consistently match or beat the performance of these baselines.

\subsection{Setup}
We chose four well-known benchmark continuous control tasks (Ant, Hopper, HalfCheetah and Walker2d) available from OpenAI Gym and utilizing MuJoCo environment. And two feed-forward neural networks were used to represent the policy and value estimates. In addition, we used uniformly a simple policy represented by a multivariate gaussian for both DSPG and SAC.

For DDPG, identical experimental setup was employed in our DDPG as in \cite{lillicrap2015continuous}, actually which has to perform a grid search to achieve good performance due to its hyperparameter sensitivity. Besides, to be consistent with DDPG and DSPG, we merely compared to the general form of SAC without applying additional tricks, such as hard target update every fixed interval and using two Q-functions, although these tricks are effective for all three algorithms.

To evaluate DSPG's performance with an appropriate global gradient norm $\mathcal{N}$, we performed a simple grid search over $\mathcal{N}\in\{1,3,5\}$. And the search results show that smaller global gradient norm will slightly slow down the learning but make the learning more stable. Based on this conclusion, we chose $\mathcal{N}=5$ for both Ant and Walker2d, while $\mathcal{N}=1$ and $\mathcal{N}=3$ were chosen for Hopper and HalfCheetah, respectively. Moreover, we sampled $M=64$ times with the current learned policy $\pi_\theta$ to estimate the expectation with respect to the action in each train step, and we performed 4 train steps every interaction step to accelerate the learning for all algorithms. A detailed description of our implementation and experimental setup is available in the Appendix \ref{app:setup}. The source code of our DSPG implementation will be available online after the paper is accepted.

\subsection{Results}
\begin{figure}[!t]
 \centering
 \scriptsize
  \includegraphics*[width=0.95\linewidth,viewport=10 195 580 635]{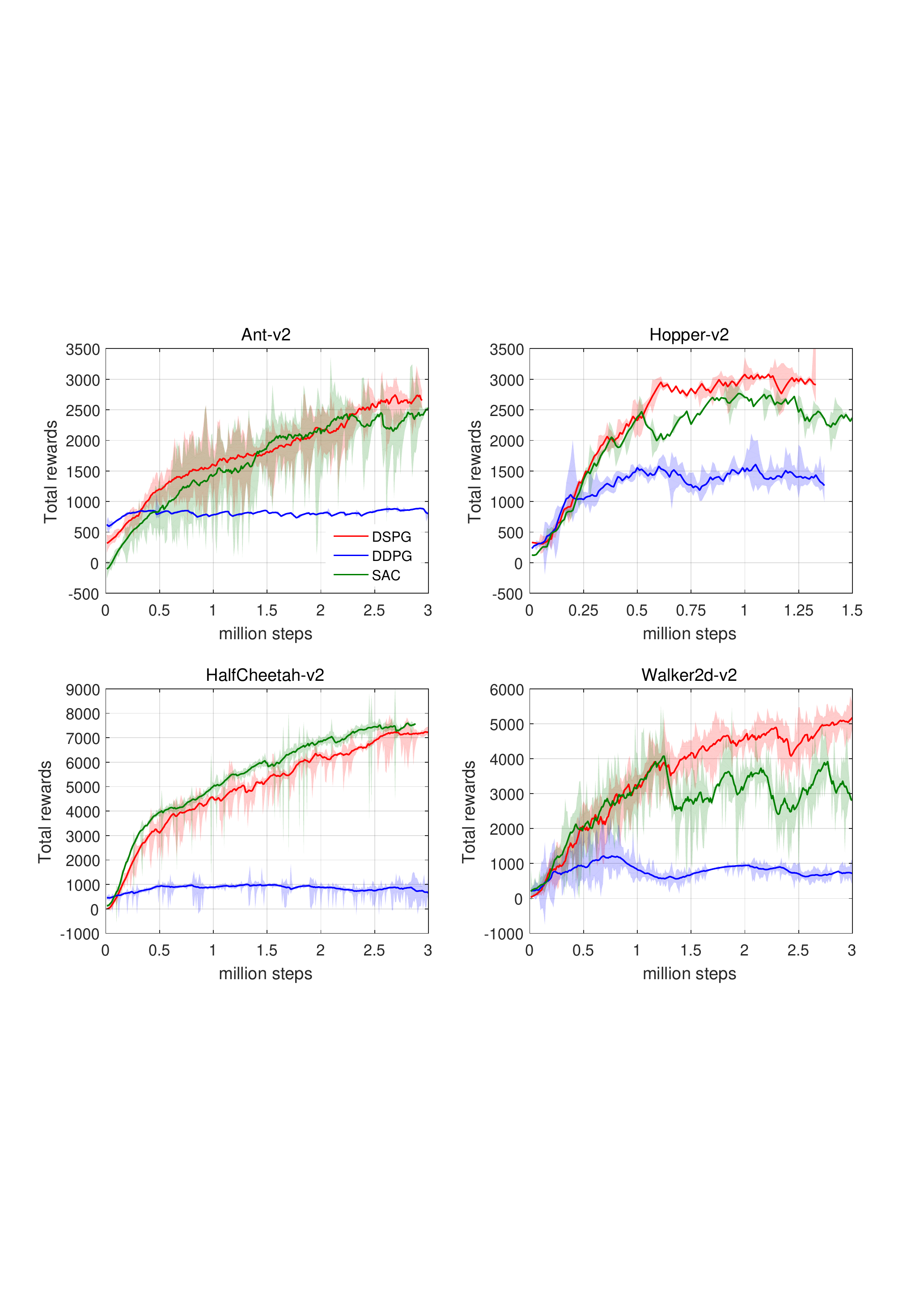}
 \caption{The results of DSPG against DDPG and SAC baselines on continuous control benchmarks. Each plot shows the average total rewards of evaluation rollouts across 7 randomly seeded training runs after choosing best hyperparameters. The x-axis shows millions of environment steps.}
 \label{fig:learning_curves}
\end{figure}

We present the average total rewards over training of DSPG, DDPG and SAC in Figure \ref{fig:learning_curves}. And Table \ref{tab:stats} shows the best average total rewards in all steps of training for our implementation and baselines. The results show that, overall, DSPG substantially outperforms DDPG on all of benchmark tasks, in terms of both sample efficiency and convergent performance. The notable gap of performance between DSPG and DDPG is ascribed to the hyperparameter sensitivity of DDPG, but also suggests significant advantage of our DSPG relative to DDPG in challenging tasks with very high state and action dimensionality. Moreover, it can be seen in Figure \ref{fig:learning_curves} that DSPG beats the performance of SAC in Hopper and Walker2d, and can match the performance of SAC in Ant and HalfCheetah.

In addition to the average total rewards, the stability of algorithm also plays a crucial role in performance. And a conclusion can be drawn from Figure \ref{fig:learning_curves} that DSPG outperforms consistently SAC in terms of the stability in all four benchmark tasks. The good stability of DSPG can be demonstrated from two observations. First, clipped soft policy gradient is applied to guarantee steepest optimization direction and prevent large update step of policy, which is typically considered to be main source of instability. Second, only one critic network is used to estimate soft Q-function with double sampling approach in DSPG, which avoids cumulative approximation errors resulting from a separate soft value function approximator. In practical, we suggest to take into account available computing resources and the cost of sampling when choosing appropriate algorithm for specific task. For tasks with limited computing resources but low cost of sampling, DSPG seems to be better.
\renewcommand\arraystretch{1.25}
\begin{table}
\centering
\begin{tabular}{lccc}
\hline
Domains         & DSPG                & SAC                 & DDPG        \\
\hline
Ant-v2          & 3384.074            & $\mathbf{3472.346}$ & 1012.242    \\
HalfCheetah-v2  & 7889.747            & $\mathbf{8103.202}$ & 1014.372    \\
Hopper-v2       & $\mathbf{3674.029}$ & 3652.893            & 3097.082    \\
Walker2d-v2     & $\mathbf{6060.884}$ & 5520.419            & 2507.199    \\
\hline
\end{tabular}
\caption{The results for best average total rewards in all training steps for our DSPG, DDPG and SAC implementations. These results are each on different setups with different hyperparameter searches. Thus, although it is not possible to make any definitive claims based on this data, we do conclude that our results are overall competitive with DDPG and SAC baselines.}
\label{tab:stats}
\end{table}

\section{Conclusion}
We presented deep soft policy gradient (DSPG), an off-policy actor-critic and model-free maximum entropy deep reinforcement learning algorithm. Our theoretical results derive soft policy gradient based on entropy regularized expected reward objective. Building on this result, we formulated our DSPG algorithm by combining soft policy gradient with soft Bellman equation. The experimental results suggest that DSPG can perform well on a set of challenging continuous control tasks, improving upon DDPG and SAC in terms of average total rewards and stability. Moreover, soft policy gradient provides a promising avenue for future works.

\section*{Acknowledgments}
This research was supported by the National Science Foundation of China under Grant 41427806.

\appendix
\section{Experimental Setup}\label{app:setup}
Throughout all experiments, we use Adam for learning the neural network parameters with a learning rate $5\times 10^{-5}$ and $5\times 10^{-4}$ for the actor and critic respectively. For critic we use a discount factor of $\gamma=0.99$. For the soft target updates we use $\alpha=0.01$. Both the actor and critic are represented by full-connected feed-forward neural network with two hidden layers of dimensions 512. And all hidden layers use ReLU activation. Specially, we use identity and sigmoid activations for the mean and standard deviation in the output layer respectively. The algorithm use a replay buffer size of three million and train with minibatch sizes of 100 for each train step. Training does not start until the replay buffer has enough samples for a minibatch and does not stop until the global time step equals to the threshold of $3\times 10^6$. In addition, we scales the reward function by a factor of 5 for all four tasks, as is common in prior works.

\newpage
\bibliographystyle{named}
\bibliography{ijcai19}

\end{document}